\documentclass[12pt]{l4dc2021} 


\title[Robust pruning and quantisation bounds]{Robust error bounds for quantised and pruned neural networks}
\usepackage{times}
\usepackage{amsmath,bm}
\usepackage{bbding}
\usepackage{mathtools}
\usepackage{amsmath,amsfonts,mathtools}
\usepackage{pifont}
\usepackage{wasysym}
\usepackage{amssymb}
\usepackage{times}
\usepackage{amsmath}
\usepackage{mathtools}

\DeclarePairedDelimiter\floor{\lfloor}{\rfloor}
\usepackage{amssymb}
\usepackage{graphicx}
\usepackage{float}
\usepackage{graphics}
\usepackage{caption}
\usepackage{url}
\usepackage{setspace}
\usepackage{bytefield}
\usepackage{wrapfig}
\usepackage{caption}
\urlstyle{same}
\newtheorem{problem}{Problem}

\setlength{\parindent}{11pt}

\makeatletter
 \let\Ginclude@graphics\@org@Ginclude@graphics 
\makeatother


\author{\Name{Jiaqi Li} \Email{jiaqi.li@st-hughs.ox.ac.uk}\\
 \Name{Ross Drummond} \Email{ross.drummond@eng.ox.ac.uk}\\
 \Name{Stephen R. Duncan} \Email{stephen.duncan@eng.ox.ac.uk}\\
 \addr Department of Engineering Science, University of Oxford, 17 Parks Road, OX1 3PJ, Oxford, United Kingdom}

\begin{document}

\maketitle
\vspace{-1.2cm}
\begin{abstract}

With the rise of smartphones and the internet-of-things, data is increasingly getting generated at the edge on local, personal devices. For privacy, latency and energy saving reasons, this shift is causing machine learning algorithms to move towards decentralisation with the data and algorithms stored, and even trained, locally on devices. The device hardware becomes the main bottleneck for model capability in this set-up, creating a need for slimmed down, more efficient neural networks. Neural network pruning and quantisation are two methods that have been developed for this, with both approaches demonstrating impressive results in reducing the computational cost without sacrificing significantly on model performance. However, the understanding behind these reduction methods remains underdeveloped. To address this issue, a semi-definite program  is introduced to bound the worst-case error caused by pruning or quantising a neural network. The method can be applied to many neural network structures and nonlinear activation functions with the bounds holding robustly for all inputs in specified sets. It is hoped that the computed bounds will provide certainty to the performance of these algorithms when deployed on safety-critical systems.

\end{abstract}

\begin{keywords}%
  Robustness; quantised neural networks; error bounds. 
\end{keywords}

\section{Introduction}

The way we generate data is rapidly changing in response to the rise of the  internet-of-things and the widespread use of smartphones \citep{zhou}. In terms of data sources, there is an accelerating shift away from  big data, like e-commerce records, stored on mega-scale cloud data centres towards more localised data sources generated by users interacting with their personal devices \citep{zhang}. This evolving data landscape is causing a rethink to how machine learning algorithms are trained and implemented. 

The most intuitive way to train and implement these algorithms within this new data landscape remains \emph{centralised learning} where the data is sent to a central data centre in the cloud where the algorithm is then trained and run \citep{zhou}. This approach should give the best model performance and is a natural extension of existing algorithm training frameworks but has several flaws \citep{zhou,gunduz2019machine}. These include: (i) Concerns around user privacy since the data is stored in a data centre which could be owned by a third party, (ii) The increased latency of centralised algorithms in responding to new data, and (iii.) As the algorithms get ever larger in response to the growing availability of data, they are requiring significantly more compute power, memory storage and energy. 

These flaws are inherent to centralised learning and motivate growing efforts to direct machine learning towards a more \emph{decentralised} approach. With decentralised learning \citep{zhang,zhou} (often known as \emph{edge intelligence} \citep{zhou, deng2020edge,shi2020communication} and falling under the umbrella of  \emph{distributed learning} \citep{zhang} amongst other labels), there is a greater emphasis on implementing and even training the algorithms locally on devices. In this way, decentralised learning aims to generate smaller, more personalised algorithms that can help circumnavigate the above-mentioned issues with centralised learning- albeit typically doing so at the expense of model performance \citep{zhang}. Running the algorithms locally on the devices means that hardware constraints (e.g. memory availability, compute power and energy consumption) become important. Managing these constraints whilst still delivering reasonable model performance is one of the main bottlenecks that decentralised learning needs to overcome and has led to a push for slimmed-down, efficient algorithms (typically neural networks (NNs)- the class of model focussed on in this work). As such, there has been increased interest in pruned neural networks which sparsify the weights and biases \citep{blalock} and quantised neural networks (e.g. \citep{shin2016fixed,courbariaux2014training,sung2015resiliency}) which use a coarse fixed-point representation of the algorithm's parameters. However, a full and robust understanding of how modifications like network pruning and quantisation affect the neural networks outputs, that goes beyond sensitivity studies like \citep{shin2016fixed} for instance, is as yet still not fully developed. 

To address these issues, this paper introduces a formal framework to compute performance guarantees for neural networks implemented on low-cost and memory-limited hardware. Specifically, we address the problem of determining how the output of a neural network is corrupted by either quantising its parameters and/or data after storing them using fixed-point arithmetic or from pruning it. In other words, a method to compute bounds for quantised and pruned neural networks is introduced.   The main contributions are: 
\begin{itemize}
\item A semi-definite program (SDP) to bound the worst-case error from neural network pruning or from quantising its weights, biases and input using fixed-point arithmetic. 
\item The computed bounds hold robustly for all input data in specified sets and account for the neural network's nonlinearities.
\item The neural network quantisation and pruning bounds are shown to be special cases of a more general problem, called the neural network \emph{similarity problem}, which bounds the worst-case error of two different neural networks outputs for all inputs in specific sets.
\end{itemize}
The neural network \emph{similarity problem} (Problem \ref{prob:general}) connects to the notion of incremental stability in dynamical systems \citep{zames}, with the second neural network of the problem being either the quantised or pruned version.


\paragraph{Related work:} Many results have demonstrated the potential of both quantised and pruned neural networks to realise machine learning on limited hardware.  For example, \cite{gong2014compressing} achieved a 16-24$\times$ network compression for the 1000-category classification on ImageNet with only a 1$\%$ loss of classification accuracy and \cite{lin2016fixed} showed no loss in accuracy by reducing a model trained on the CIFAR-10 benchmark by $>20\%$. More extreme levels of quantisation have also been explored, for example binarised neural networks (where the weights and/or activation functions are restricted to be binary) have also demonstrated rapid compute speeds and large memory savings without significant accuracy losses (\cite{hubara2016binarized}  and \cite{rastegari2016xnor}). 

The quantisation error bounds presented in this work are closely related to the semi-definite programmes for certifying neural network robustness developed in  \cite{fazlyab2019safety,raghunathan2018semidefinite,dathathri}. These SDPs guarantee that small perturbations in the input will not lead to large variations in the NN output and can help prevent against adversarial attacks and make the NNs more reliable to out-of-sample data. Recently, there has been a focus on improving the scalability of these SDP robustness certificates \citep{dathathri} (since they scale by $\mathcal{O}(n^6)$ in runtime and $\mathcal{O}(n^4)$ in memory requirements, where $n$ is the number of neurons in the network \citep{dathathri}), a particularly exciting direction for this paper as it could help make the presented bounds applicable to larger NNs. 

\subsection*{Acronyms, Notations and Preliminaries}

$\mathbb{R}_+$, $\mathbb{R}^n_+$ and $\mathbb{R}^{n \times n}_+$ denote real non-negative numbers, non-negative real vectors of dimension $n$ and real non-negative matrices of size $n \times n$ respectively. Positive diagonal matrices of size $n \times n$ are denoted $\mathbb{D}^n_+$. $\mathbb{Z}_+$ ($\mathbb{Z}_-$) is the set of positive (negative) integers. The set of natural numbers (non-negative integers) is $\mathbb{N}$. The matrix of zeros of dimension $n \times m$  is $\bm{0}_{n \times m}$ and $\bm{0}_{n}$ is the vector of zeros of dimension $n$. The matrix of ones of dimension $N \times M$ is $\bm{1}_{N \times M}$ and $\bm{1}_{N}$ is the vector of ones of dimension $N$. The identity matrix of dimension $N$ is $\mathbb{I}_{N}$.  The $blkdiag\{a\}$ function takes an array as input and returns a block diagonal matrix with elements of $a$ on its main diagonal as ordered in $a$. Strict and elementwise LMIs are posed as $F(x) > 0$ while non-strict and elementwise LMIs are written as $F(x) \geqslant 0$. A positive (negative) definite matrix $\Omega$ is denoted $\Omega \succ (\prec )~ 0$, and a positive (negative) semi-definite $\Gamma$ matrix is written $\Gamma \succcurlyeq (\preccurlyeq )~ 0$. The $p$-norm is displayed by $||\cdot||_p:\mathbb{R}^n\to\mathbb{R}_+, p \in \mathbb{N}$. Where acronyms are used, ``NN'' stands for ``artificial neural network'', ``QC'' stands for quadratic constraint, ``LMI'' stands for ``linear matrix inequality'', and ``SDP'' stands for ``semi-definite programme''.

\section{Neural Networks: Representation in state-space form} \label{sec:genProblem}

Consider two functions $f_1(x_1): \mathcal{X} \to \mathcal{F}$ and $f_2(x_2): \mathcal{X} \to \mathcal{F}$ acting upon input data $x_1, x_2 \in \mathcal{X}$. These functions are assumed to be defined by feed-forward NNs composed of $\ell$ hidden layers, with the $k^{\text{th}}$ layer  composed of $n_k$ neurons and the total number of neurons being $N = \sum_{k = 1}^{l} n_k$. Both the first and second neural network ($i \in \{1,2\}$) can be expanded out in a state-space-like form
\begin{subequations}\label{nn1}\begin{align}
x_i^0  & = x_i,
\\
x_i^{k+1}  & = \phi(W_i^kx_i^k + b_i^k), \quad k= 0, 1, \dots, \ell-1,
\\
f_i(x_i)  & = W_i^\ell x_i^\ell + b_i^\ell.
\end{align}\end{subequations}
Here, for neural network $i \in \{1,2\}$, $x^0_i \in \mathcal{X}_i \subseteq \mathbb{R}^{n_x}$ represents the input data, $W^k_i\in \mathbb{R}^{n_{k+1} \times n_k}$ the weights, $b^k_i \in \mathbb{R}^{n_{k+1}}$ the biases, $ k = 0,\, \dots, \ell-1$, defines the network layers,  $n_x$ and $n_f$ are, respectively, the network's input and output dimensions  and $\phi(\cdot)$ are the activation functions- typically a rectified Linear Unit (ReLU), sigmoid or tanh- which in this paper can be any function satisfying some of the properties of Definition \ref{def:nonlinearfunctional} in Appendix 4. The activation functions are taken to act elementwise on their vector arguments.


\subsection{Compact neural network representation}

For the $i^\text{th}\in \{1,\, 2\}$ neural network, define the arguments of the activation functions  $\phi(\cdot)$ as
 \begin{align}\label{xi_def}
    \xi_i =  \begin{bmatrix}\xi^1_i \\ \xi^2_i \\ \vdots \\ \xi^\ell_i\end{bmatrix}
     = \begin{bmatrix}W_i^0&\bm{0}_{n_1\times n_1}&\dots&\bm{0}_{n_1\times n_{l-1}}\\
     \bm{0}_{n_2\times n_2}&\ddots&\ddots&\vdots\\
     \vdots&\ddots&\ddots&\bm{0}_{n_{\ell-1}\times n_{\ell-2}}\\
     \bm{0}_{n_l\times n_{x}}& \hdots & \bm{0}_{n_1\times n_{l-2}} &W_i^{\ell-1}\end{bmatrix}
     \begin{bmatrix}x^0_i\\ x^1_i\\\vdots\\ x^{\ell-1}_i\end{bmatrix}
     +\begin{bmatrix}b^0_i \\ b^1_i\\\vdots \\ b^{\ell-1}_i\end{bmatrix} 
     \in \mathbb{R}^M.
\end{align}
These vectors allow the $i^\text{th}$ NN of \eqref{nn1} be written in the more compact form  
 \begin{subequations}\label{full}\begin{align}
\xi^{k+1}_i  & = W_i^kx_i + b_i^k, \quad k= 0, 1, \dots, \ell-1,
\\
x^{k+1}_i  & = \phi(\xi^{k+1}_i), \quad k= 0, 1, \dots, \ell-1,
\\
f_i(x_i)  & = W^\ell_i\phi(\xi_i^\ell) + b_i^\ell.
\end{align}\end{subequations}
The vectors
\begin{align}\label{mu}
\mu = \begin{bmatrix} {\xi_1}^T, &  {\xi_2}^T, & {\phi(\xi_1)}^T, & {\phi(\xi_2)}^T, & 1 \end{bmatrix}^T, \eta =  \begin{bmatrix} {x_1}^T, &  {x_2}^T, & {\phi(\xi_1)}^T, & {\phi(\xi_2)}^T, & 1 \end{bmatrix}^T,
\end{align}
will also be used throughout the paper to build the various inequalities. These two vectors are linearly related through a matrix $E\in \mathbb{R}^{2(n_x+M)+1 \times 4M+1}$ by $\eta = E \mu.$

\subsection{The neural network similarity problem: Bounding the worst-case error between two neural networks} \label{sec:problemstatement}

The neural network \emph{similarity problem} is now introduced to bound the worst-case error between the outputs of two neural networks for all input data in pre-defined sets (e.g. hypercubes). Solutions to this problem indicate how similar two NNs are to each other. 

\begin{problem} \label{prob:general}
Given the  two neural networks $f_1(x_1):\mathcal{X} \to \mathcal{F}$ and $f(x_2):\mathcal{X} \to \mathcal{F}$ from \eqref{nn1},   minimise the worst-case output error bound
\begin{align}\label{bound_def}
||f_1(x_1)-f_2(x_2)||_2^2 \leq \gamma+ \gamma_{x_1}\|x_1\|_2^2+ \gamma_{x_2}\|x_2\|_2^2+ \gamma_{x}\|x_1-x_2\|_2^2,
\quad \forall x_1, x_2 \in \mathcal{X},
\end{align}
subject to $(\gamma_x,\,\gamma_{x_1},\,\gamma_{x_2},\gamma)\in \mathbb{R}_+$.

\end{problem}

The above problem establishes the general framework to address the specific problem of interest to this paper- that of bounding the worst-case error of a pruned or quantised neural network. To specialise Problem \ref{prob:general} to the computation of this NN quantisation bound, the second neural network in Problem \ref{prob:general} set to be the quantised one, as in 
\begin{align}
    W_2^k = q(W_1^k), \quad b_2^k = q(b_1^k), \quad  x_2 = q(x_1),
\end{align}
where $q(\cdot)$ is the quantisation function defined in Section \ref{sec:quant}. Likewise, upon considering pruned networks, Problem \ref{prob:general} is adapted by setting the second NN to be the pruned one, as in $W_2^k = p(W_1^k)$ where   $p(\cdot)$ is a pruning operator, see Section \ref{sec:Pruning}. 




\section{Quantised neural networks} \label{sec:Quantise}

This section outlines a mathematical representation of quantisation when applied to neural network elements. With this representation, network quantisation can be incorporated into the neural network similarity problem of Problem \ref{prob:general}, to bound the effect of additionally quantising the network input and neuron parameters.

\subsection{Fixed-point arithmetic}

It will be assumed that the quantised neural network’s weights, biases and data are stored using fixed-point arithmetic, detailed in \cite{yates2009fixed, 8109980}. The $\langle \texttt{IB,FB} \rangle$ format for fixed-point representations adopted in \cite{DBLP:journals/corr/GuptaAGN15,210171} is used, whereby \texttt{IB} and \texttt{FB} respectively denote the number of bits allocated for the integer and fractional parts of the number. As an example, the number 31.4592 is represented as
\vspace{-0.3cm}
\begin{figure}[H]
\centering
    \includegraphics[width=0.4\textwidth]{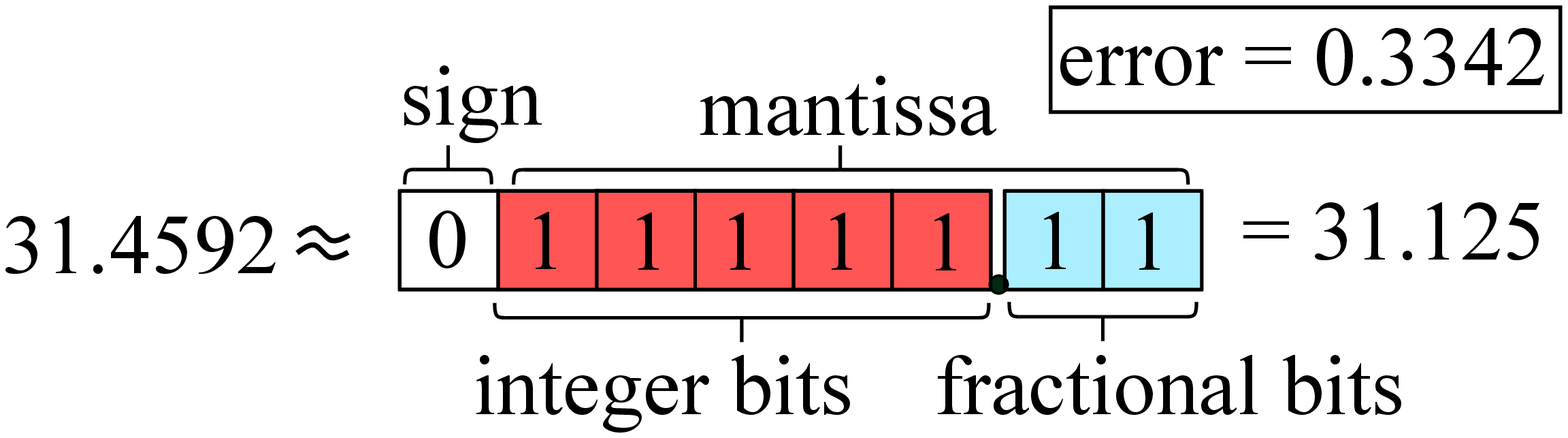}
\end{figure}
\vspace{-0.5cm}
In this work, saturation of the representation is neglected. This is an important issue but could be avoided with the introduction of ``temporary fixed-point registers with enough number of bits'' (\cite{DBLP:journals/corr/GuptaAGN15}).



\subsection{Quantisation function} \label{sec:quant}

The piece-wise constant quantisation function $q(\cdot)$ acting element-wise on vector and matrix arguments can be used to capture the fixed-point representation of the quantised neural network's weights, biases and data. 

Given a real number $s \in \mathbb{R}$, the quantisation function is defined as
\begin{align} \label{equ:q(x)}
    q(s) \triangleq \frac{\text{sgn}(s)\floor*{|s| \cdot \Delta}}{\Delta},    \Delta = 2^{- \texttt{FB}}, \texttt{FB} \in \mathbb{Z}^{+},
\end{align}
where $\floor*{\cdot}$ is the ``floor'' operator and $\Delta$ is the binary quantisation step, with the word length \texttt{FB} setting the numerical precision of the quantisation. For the robustness analysis of the following sections, this function will be upper and lower bounded by 
\begin{figure}[H]
\centering
    \includegraphics[width=0.25\textwidth]{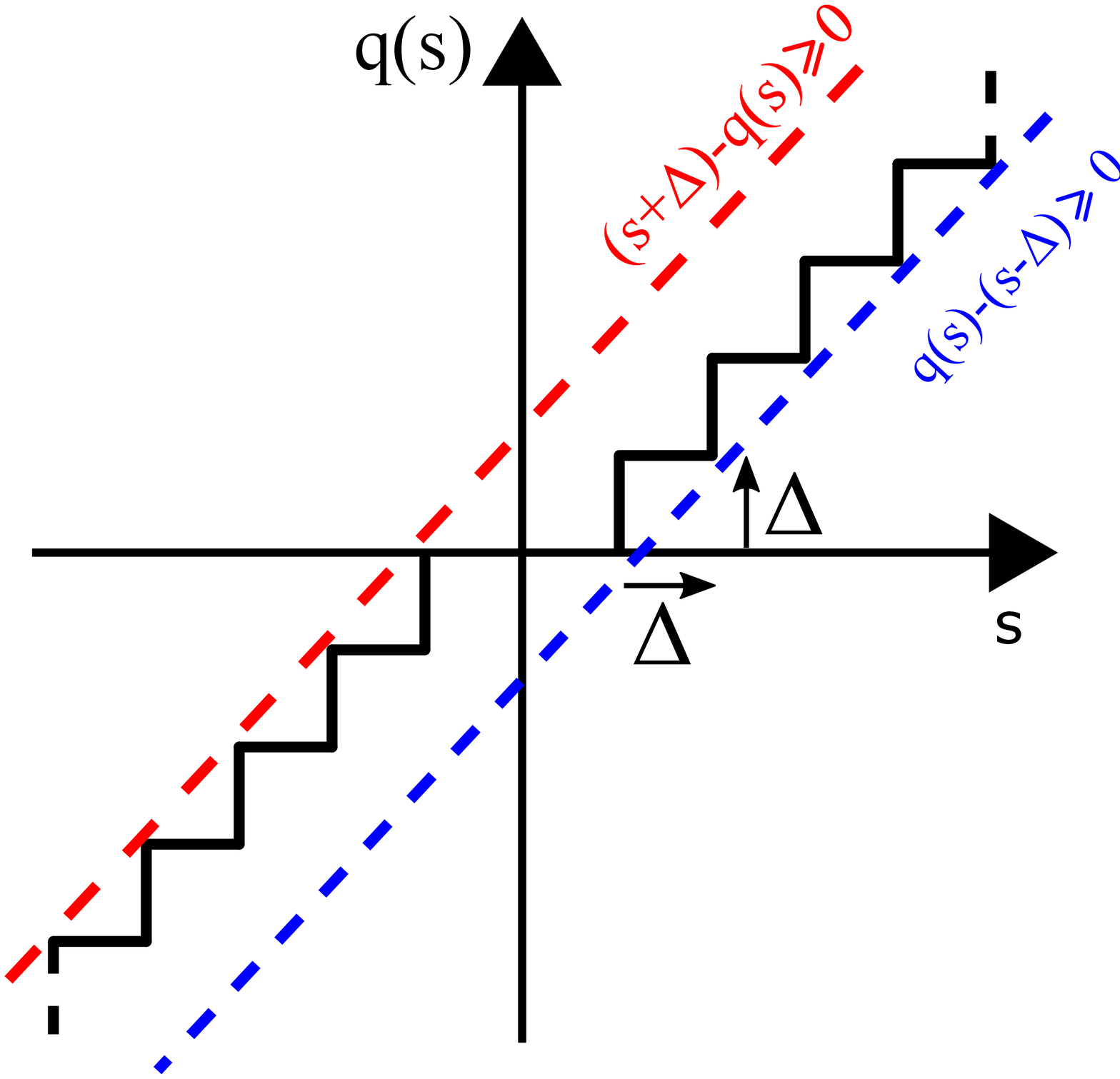}
\end{figure}

Decreasing \texttt{FB} causes a coarse fixed-point representation, resulting in information being lost because of the quantisation and the neural network's output being corrupted, inducing an error. Given a binary number ($BN$) represented in fixed-point format, the original decimal format ($DN$) can be recovered from 
\begin{equation}
    DN = \sum_{i=1}^{\texttt{IB}-1} BN_{\langle I \rangle_i} * 2^{i} + \sum_{j=1}^{\texttt{FB}-1} BN_{\langle F \rangle_j} * 2^{-j},
\end{equation}\label{equ:bi2de}
with $\langle I \rangle$ and $\langle F \rangle$ denoting the integer and fractional parts of $BN$.


Note in this work the floor rounding scheme was used, i.e. $q(\cdot)$ maps an input value to the largest integer multiple of $\Delta$ smaller than the input value. Other rounding schemes exist and may be adapted into the proposed framework, e.g. round-to-nearest (\cite{DBLP:journals/corr/GuptaAGN15}) and various piece-wise-linear and stochastic rounding methods.

As ReLU activation functions were adopted for the numerical results of this paper, the effect of additionally quantising the activation functions was ignored. However, this effect could be incorporated into the analysis as long as the quantised activation functions satisfy the quadratic constraints. 

\section{Pruned neural networks} \label{sec:Pruning}

Neural network pruning involves setting the weights or neurons considered to be the least important to zero, with the purpose of making the network structure sparse and so improve its computational and memory efficiency (\cite{hooker2020compressed}). A state-of-the-art pruning method is presented in \cite{han2015learning} where a three-step framework is applied to train, prune and refine neural network parameters. Using this framework both AlexNet and VGG-16 have been successfully and effectively downsized with no loss in accuracy (\cite{han2015learning}). In this paper, the operator $p(\cdot)$ denotes the pruning action implemented with a magnitude based approach, although other pruning approaches may also be implemented. By setting the second NN in \eqref{nn1} to be the pruned version, Problem \ref{prob:general} can be solved to compute the pruning error bounds.


\section{Abstracting the nonlinearities and sets using quadratic constraints} \label{sec:abstraction}

In general, solving Problem \ref{prob:general} is intractable because of: (i) the potentially non-convex mapping of each NN with nonlinear activation functions, and  (ii) the bounds are required to hold robustly for all inputs in the pre-defined set $(x_1,\,x_2) \in \mathcal{X}$ with $\mathcal{X}$ allowed to be infinite dimensional. Using quadratic constraints, this paper is able to provide an upper bound for the error from neural network pruning and quantisation by producing an algebraic, convex outer approximation of the various sets and nonlinearities of the problem. This allows solutions to Problem \ref{prob:general} to be computed via a SDP, with the conservatism of the approach coming from the fact that it only implicitly characterises the various nonlinearities and sets of the analysis. The remaining part of this section details the quadratic constraints for the input set, the nonlinear activation functions, the quantisation and the error bound \eqref{equ:Mgamma}. 
\subsection{Abstraction of the input set}
 The various quadratic constraints satisfied when the NN inputs are contained to the set $(x_1,\, x_2)\in \mathcal{X}$ are detailed in Appendix 3. When the inputs are constrained to this set, there exists a matrix $P_\mathcal{X}\in \mathbb{R}^{2(n_x+M)+1\times 2(n_x+M)+1}$  built from the matrix variables (the lambdas of Definition \ref{def:input_hypercube} in Appendix 3) such that
\begin{align}\label{equ:Mx}
M_{\mathcal{X}}(P_\mathcal{X})
= \eta^TP_\mathcal{X}\eta \geq 0, \quad \forall (x_1,\,x_2) \in \mathcal{X}.
\end{align}


\subsection{Abstraction of the activation functions}
Dealing with the nonlinear activation functions is the main source of difficulty in generating robust solutions to Problem \ref{prob:general}. Thankfully,  most of the commonly used NN activation functions (e.g. those mentioned in Table \ref{tab:activationfunc} of Appendix 3 have properties that allow them to be characterised by quadratic constraints. The mathematical definitions of these properties can be found in Definition. \ref{def:nonlinearfunctional} of Appendix 4 and the associated quadratic constraints are given in Lemma \ref{lem:ReLUQCs} of Appendix 4.

Using the compact NN representation of  \eqref{full}, the various quadratic constraints satisfied by the activation functions of Lemma \ref{lem:ReLUQCs} in Appendix 4 can be collected together and represented as a single inequality
\begin{align}\label{equ:Mlambda}
M_{\phi}(\Lambda) = \mu^T \Lambda \mu=  \eta^T \left(E^T\Lambda E\right) \eta\geq  0, \quad \forall (x_1,\,x_2) \in \mathcal{X}.
\end{align}
Here, the matrix
\begin{align}
\Lambda = \begin{bmatrix} \bm{0}_{N \times N}&  \bm{0}_{N \times N} &  \Lambda_{13} &  \Lambda_{14} &  \Lambda_{15} 
\\
 \bm{0}_{N \times N} &   \bm{0}_{N \times N}&  \Lambda_{23} &  \Lambda_{24} &  \Lambda_{25} 
\\
{\Lambda_{13}}^T &  {\Lambda_{23}}^T &  \Lambda_{33} &  \Lambda_{34} &  \Lambda_{35} 
\\
{\Lambda_{14}}^T &  {\Lambda_{24}}^T &  {\Lambda_{34}}^T &  \Lambda_{44} &  \Lambda_{45} 
\\
{\Lambda_{15}}^T & {\Lambda_{25}}^T &  {\Lambda_{35}}^T & {\Lambda_{45}}^T &  \Lambda_{55} 
\end{bmatrix}
\end{align}
 is composed of elements $\Lambda_{i,j}, \{i, j\} \in \{1,2,3,4,5\}$ being linear combinations of the scaling variables of  the various quadratic constraints of $\phi(\cdot)$, e.g. the lambdas of Lemma \ref{lem:ReLUQCs} in Appendix 4.
 
Note that information on the quantisation and pruning of the NN parameters (weights and biases), or any other alteration in value or structure to the parameters in general, is introduced a priori into the SDP problems' constraints via the abstractions just defined, and therefore, does not require extra QCs. 


\subsection{Abstraction of the quantisation function}

Besides the activation functions $\phi(\cdot)$, for quantised NNs the quantisation function $q(x_i)$ from \eqref{equ:q(x)} acting on the input data is another nonlinearity that will have to to be accounted for in the analysis if error bounds are to be obtained. Thankfully, this nonlinear function also satisfies certain quadratic constraints as defined in Lemma \ref{lem:quant} of Appendix 5. These quadratic constraints  can be combined into the single inequality
\begin{align}\label{equ:Mq}
   M_q(P_q) = \eta^T P_q\,\eta \geq 0, \quad \forall (x_1,\,x_2) \in \mathcal{X},
\end{align}
for some matrix variable $P_q$ built from the various lambdas of Lemma \ref{lem:quant} of Appendix 5. 


\subsection{Abstraction of the error bound}
 The bound for the worst-case error between the two neural networks
\begin{align}\label{equ:error_2}
||f_1(x_1)-f_2(x_2)||_2^2 -(\gamma+ \gamma_{x_1}\|x_1\|_2^2+ \gamma_{x_2}\|x_2\|_2^2+ \gamma_{x}\|x_1-x_2\|_2^2)\leq 0, \quad \forall (x_1, x_2) \in \mathcal{X}
\end{align}
can also be expressed as a quadratic
\begin{align}\label{equ:Mgamma}
M_{\|f-g\|_2}(\Gamma) = \eta^T V^T \Gamma(\gamma_{x_1}\,\gamma_{x_2},\,\gamma_x,\,\gamma)\, V \eta \leq 0, \quad \forall (x_1,\,x_2) \in \mathcal{X},
\end{align}
where 
\begin{align}
\Gamma = - \begin{bmatrix} (\gamma_{x_1} + \gamma_x) \mathbb{I}_{n_{x_1}} &  -\gamma_x \mathbb{I}_{n_{x_1}} &  \bm{0}_{n_{x_1} \times n_{f_1}} &  \bm{0}_{n_{x_1} \times n_{f_2}} &  \bm{0}_{n_{x_1} }
\\
-\gamma_x\mathbb{I}_{n_{x_2}}  &   (\gamma_{x_2} + \gamma_x)\mathbb{I}_{n_{x_2}} &  \bm{0}_{n_{x_2} \times n_{f_1}} &  \bm{0}_{n_{x_2} \times n_{f_2}} &  \bm{0}_{n_{x_2}} 
\\
\bm{0}_{n_{f_1} \times n_{x_1}}  &   \bm{0}_{n_{f_1} \times n_{x_2}}  &  \mathbb{I}_{n_{f_1}}  &  -\mathbb{I}_{n_{f_1}}  &  \bm{0}_{n_{f_1}}
\\
\bm{0}_{n_{f_2}} &  \bm{0}_{n_{f_2} \times n_{x_2}} &  - \mathbb{I}_{n_{f_2}}  &  \mathbb{I}_{n_{f_2} } &  \bm{0}_{n_{f_2} }
\\
{\bm{0}_{n_{x_1}}}^T  & {\bm{0}_{ n_{x_2}}}^T  &  {\bm{0}_{n_{f_1}}}^T & {\bm{0}_{1 \times n_{f_2}}}^T &  \gamma
\end{bmatrix},
\end{align}
and
\begin{align}
    V = \begin{bmatrix} \mathbb{I}_{n_{x_1}} &\bm{0}_{n_{x_1}\times n_{x_2}}  & \bm{0}_{n_{x_1}\times M} & \bm{0}_{n_{x_1}\times M} & \bm{0}_{n_{x_1}}  \\
    \bm{0}_{n_{x_2}\times n_{x_1}} &  \mathbb{I}_{n_{x_2}} &\bm{0}_{n_{x_2}\times M} & \bm{0}_{n_{x_2}\times M} & \bm{0}_{n_{x_2}}  \\ 
    \bm{0}_{n_{f_1}\times n_{x_1}}  & \bm{0}_{n_{f_1}\times n_{x_2}} & \begin{bmatrix}\bm{0}_{n_{f_1}\times M-n_{\ell_1}} & W^{\ell}_1\end{bmatrix} & \bm{0}_{n_{f_1}\times M}  & b^{\ell}_1 \\ 
    \bm{0}_{n_{f_2}\times n_{x_1}} & \bm{0}_{n_{f_2}\times n_{x_2}} & \bm{0}_{n_{f_2}\times M} &  \begin{bmatrix}\bm{0}_{n_{f_2}\times M-n_{\ell_2}} & W^{\ell}_2\end{bmatrix}  &  b^{\ell}_2
    \\ {\bm{0}_{ n_{x_1}}}^T & {\bm{0}_{ n_{x_2}}}^T & {\bm{0}_{M}}^T & {\bm{0}_{M}}^T & 1
    \end{bmatrix}.
\end{align}
Here,  for a fair comparison, $n_{x_1} = n_{x_2}$ and $n_{f_1} = n_{f_2}$, as in both NNs have the same number of inputs and outputs. 

\section{Neural network error bounds as a semidefinite programme}\label{sec:sdp}
With the QCs defined for each feature of the problem, upper bounds to \eqref{bound_def} in Problem \ref{prob:general} can be computed from the SDP of Theorem \ref{thm:SDP}. Worst-case error bounds for pruned neural networks can be obtained directly upon setting the second network of the problem to be the pruned one. The specialisation to quantised neural networks has the quantisation quadratic constraints of \eqref{equ:Mq} included within the linear matrix inequality \eqref{LMI}.

\begin{theorem}\label{thm:SDP}
Consider the two neural networks defined in \eqref{nn1}  satisfying the quadratic constraints of \eqref{equ:Mx}, \eqref{equ:Mlambda} and \eqref{equ:Mq}. Set the weights $(w_{x_1},\, w_{x_2},\,w_x,\,w_\text{aff}) \in \mathbb{R}_+$ scaling the relative importance of the quadratic and affine terms in the error bound \eqref{equ:error_2}. If there exists a solution to 
\begin{subequations}\label{thm_cond}\begin{align}
 \text{min} \quad&  w_{x_1}\gamma_{x_1} + w_{x_2}\gamma_{x_2} + w_x\gamma_x + w_\text{affine}\gamma,\\
 \text{subject to} \quad& P_\mathcal{X}(\cdot) + P_q(\cdot)+ E^T\Lambda(\cdot) E + \Gamma(\cdot) \prec 0, \label{LMI}\\ \text{          }\quad& \gamma_{x_1} \geqslant 0,~\gamma_{x_2} \geqslant 0,~\gamma_x \geqslant 0, \,\gamma \geqslant 0.
\end{align}\end{subequations}
with the matrix variables $P_\mathcal{X}(\cdot), \,P_q(\cdot),\,\Lambda(\cdot)$ and $ \Gamma(\cdot)$ defined in \eqref{equ:Mx}, \eqref{equ:Mq}, \eqref{equ:Mlambda} and \eqref{equ:Mgamma}, then the error between the two neural networks is bounded from above by 
\begin{align}\label{bound_thm}
\|f_1(x_1)-f_2(x_2)\|_2^2 \leq  \gamma+ \gamma_{x_1}\|x_1\|_2^2+ \gamma_{x_2}\|x_2\|_2^2+ \gamma_{x}\|x_1-x_2\|_2^2, \quad (x_1,\,x_2) \in \mathcal{X}.
\end{align}\label{theorem:SDP}
\end{theorem}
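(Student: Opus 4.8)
The plan is to prove the theorem by the standard S-procedure (Lagrangian relaxation) argument for quadratic constraints, treating feasibility of the linear matrix inequality \eqref{LMI} as a certificate that the error quadratic \eqref{equ:error_2} is dominated by a nonnegative combination of the constraint quadratics, and hence nonpositive along every admissible network trajectory.

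First I would establish the reconstruction identity relating the matrices $V$ and $\Gamma$ to the error expression. Because the third and fourth block-rows of $V$ act on $\eta$ from \eqref{mu} to produce $W_1^\ell \phi(\xi_1^\ell)+b_1^\ell = f_1(x_1)$ and $W_2^\ell \phi(\xi_2^\ell)+b_2^\ell = f_2(x_2)$ (using that the final $n_\ell$ entries of $\phi(\xi_i)$ are exactly $\phi(\xi_i^\ell)$), one has $V\eta = [\,x_1^T,\ x_2^T,\ f_1(x_1)^T,\ f_2(x_2)^T,\ 1\,]^T$. Expanding the block quadratic form $\eta^T V^T \Gamma V \eta$ and collecting the blocks of $\Gamma$ then reproduces $\|f_1-f_2\|_2^2$ (from the identity and cross blocks in the $f$ coordinates), the term $\gamma_{x_1}\|x_1\|_2^2+\gamma_{x_2}\|x_2\|_2^2+\gamma_x\|x_1-x_2\|_2^2$ (from the $x$ blocks, after completing the square $\|x_1\|_2^2-2x_1^T x_2+\|x_2\|_2^2=\|x_1-x_2\|_2^2$), and the constant $\gamma$. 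This shows that $M_{\|f-g\|_2}(\Gamma)$ in \eqref{equ:Mgamma} coincides, up to its fixed sign, with the quantity in \eqref{equ:error_2} whose nonpositivity is exactly the claimed bound \eqref{bound_thm}.

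Next I would invoke the three abstractions already established: for every admissible pair $(x_1,x_2)\in\mathcal{X}$ and its induced trajectory, the vector $\eta$ satisfies $\eta^T P_{\mathcal{X}}\eta \geq 0$, $\eta^T (E^T\Lambda E)\eta \geq 0$ and $\eta^T P_q\eta \geq 0$ by \eqref{equ:Mx}, \eqref{equ:Mlambda} and \eqref{equ:Mq}, these holding because the multiplier (``lambda'') variables entering $P_{\mathcal{X}}$, $\Lambda$ and $P_q$ are constrained nonnegative. The core S-procedure step is then to pre- and post-multiply the feasible LMI \eqref{LMI} by $\eta^T$ and $\eta$: negative definiteness yields $\eta^T\big(P_{\mathcal{X}} + P_q + E^T\Lambda E + V^T\Gamma V\big)\eta \leq 0$ for all $\eta$, and subtracting the three nonnegative constraint terms isolates $\eta^T V^T\Gamma V \eta \leq 0$ along admissible trajectories. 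By the reconstruction identity this is precisely \eqref{equ:error_2}, which rearranges to \eqref{bound_thm}; since the argument applies to every $(x_1,x_2)\in\mathcal{X}$, the bound is robust over the whole input set.

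The main obstacle I anticipate is the sign and multiplier bookkeeping that makes the relaxation genuinely valid rather than merely formal: one must confirm that each constraint matrix enters \eqref{LMI} with the sign that keeps its embedded multipliers nonnegative (the prerequisite for the S-procedure to be a \emph{sufficient} condition), and that the strict inequality $\prec 0$ is handled so the chain of inequalities closes when combined with the non-strict constraint forms. A secondary point deserving care is that the relaxation is only sufficient, since the quadratic constraints over-approximate the true sets and nonlinearities; the theorem therefore delivers a valid upper bound but not a tight one, and I would be careful to claim only soundness. That soundness follows immediately once the reconstruction identity and the nonnegativity of the multipliers are in place.
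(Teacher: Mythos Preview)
Your proposal is correct and follows essentially the same S-procedure argument as the paper: pre- and post-multiply the LMI \eqref{LMI} by $\eta^T$ and $\eta$, use the nonnegativity of the three constraint quadratics $M_{\mathcal{X}}$, $M_q$, $M_\phi$ on admissible trajectories, and conclude that the error quadratic $M_{\|f-g\|_2}(\Gamma)$ is negative, which is exactly \eqref{bound_thm}. Your additional care in verifying the reconstruction identity $V\eta = [x_1^T,\,x_2^T,\,f_1^T,\,f_2^T,\,1]^T$ and in flagging the sign/strictness bookkeeping goes beyond the paper's terse three-line proof but does not change the route.
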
 
\begin{proof}
See Appendix 6.
\end{proof}

\section{Numerical examples and discussion}

In this section, the effectiveness of the obtained bounds was explored with several numerical examples. The first subsection presents examples for bounding the error between two generic neural networks, and the second is focused on quantised and pruned networks.

In all the examples, the input was constrained to the hyper-cube $(x_1,\,x_2) \in \mathcal{X} = \{x_1, x_2 \in \mathbb{R}^{n_x} \mid |x_1|, |x_2| \leq \overline{x} \}$, where  $\overline{x} = 1$ and the ReLU was chosen as the activation function. For all examples, the SDP of Theorem \ref{theorem:SDP} was solved with the MOSEK \citep{mosek} solver implemented through the YALMIP \citep{Lofberg2004} interface of MATLAB.  The code and the various models used in the examples can be found at:  \url{https://github.com/ElrondL/Robust-Quantisation-Bounds}.

\subsection{Neural network similarity bounds}

For the first experiment, the performance of the framework defined in Theorem \ref{theorem:SDP} was checked by comparing randomly generated neural networks with $n_x = 1$ as the input dimension, $n_f = 1$ as the output dimension and $\ell = 1, \,2, \,3, \,4$ hidden layers, each with $n_k = 10$ neurons. The two neural networks were fed with randomly generated signals $\{x_1, x_2\} \in \mathcal{X}$. For each NN layer dimension $(\ell)$, 100 random neural networks were generated and the average $\gamma_{x_1}$, $\gamma_{x_2}$, $\gamma_x$, $\gamma$, mean bound tightness, maximum bound tightness, minimum bound tightness, and runtime were recorded. Tightness of the bound was defined as the natural log of the error between  the output difference norm and the over approximation bound 
\vspace{-0.0cm}
\begin{align} \label{equ:boundT}
    T = \ln((f(x_1)-f(x_2))^2) - \ln(\gamma_{x_1} x_1^2 + \gamma_{x_2} x_2^2 + \gamma_x (x_1-x_2)^2 + \gamma),
\end{align}
with the results given in Table \ref{tab:NNcompData} of Appendix 7.
\vspace{-0.2cm}
\begin{figure}[H]
\floatconts
  {fig:NNCompFig}
  {\caption{Example neural network error bound with $\ell = 2$.}}
  {%
    \subfigure[The error (coloured) and bound (grey) surfaces.]{\label{fig:NNCompigBE}%
      \includegraphics[width=0.43\linewidth]{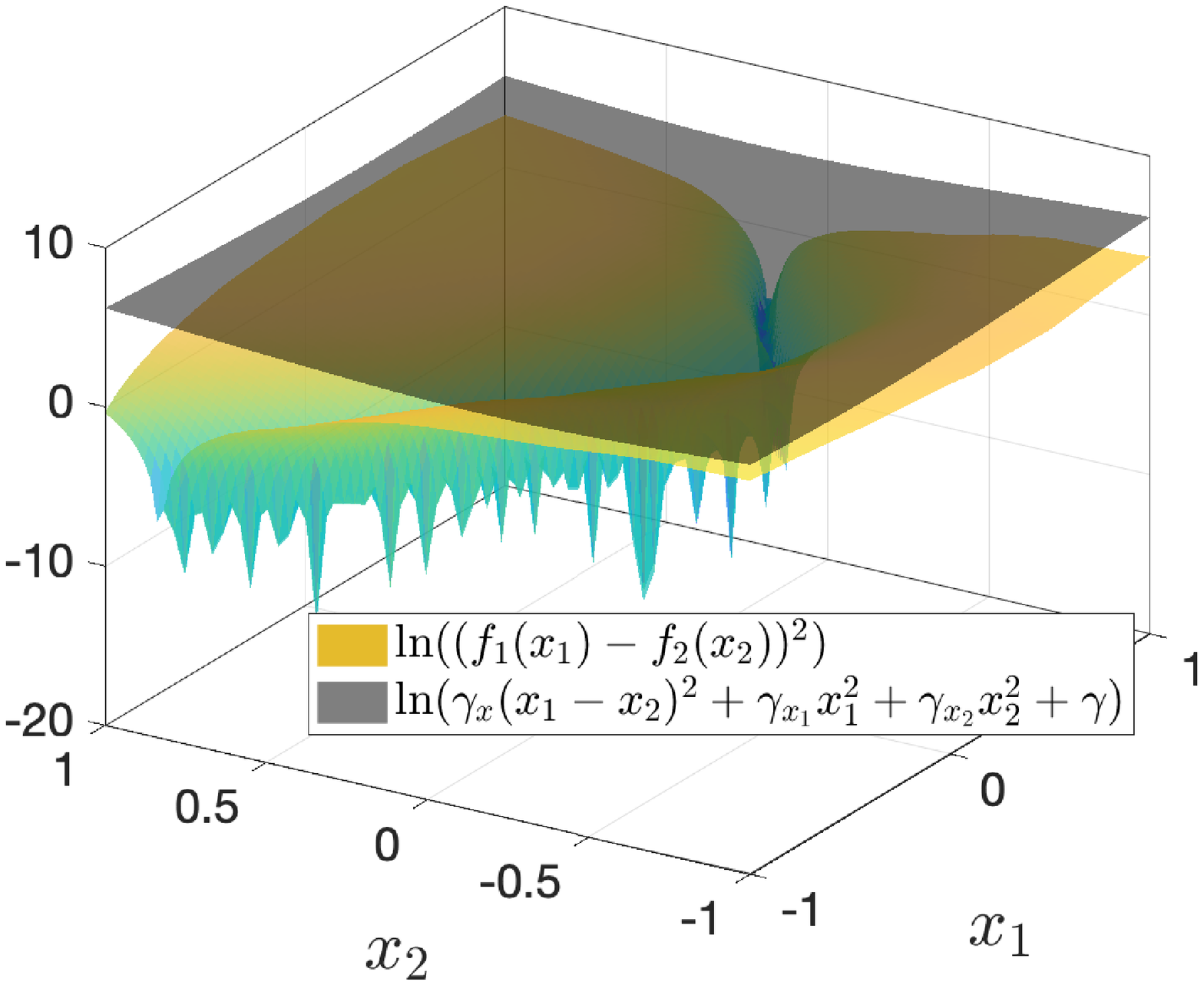}}%
    \qquad
    \subfigure[The tightness surface $T$ of the bound and error surfaces in Fig.\ref{fig:NNCompFig}(a)]{\label{fig:NNCompFigT}%
      \includegraphics[width=0.43\linewidth]{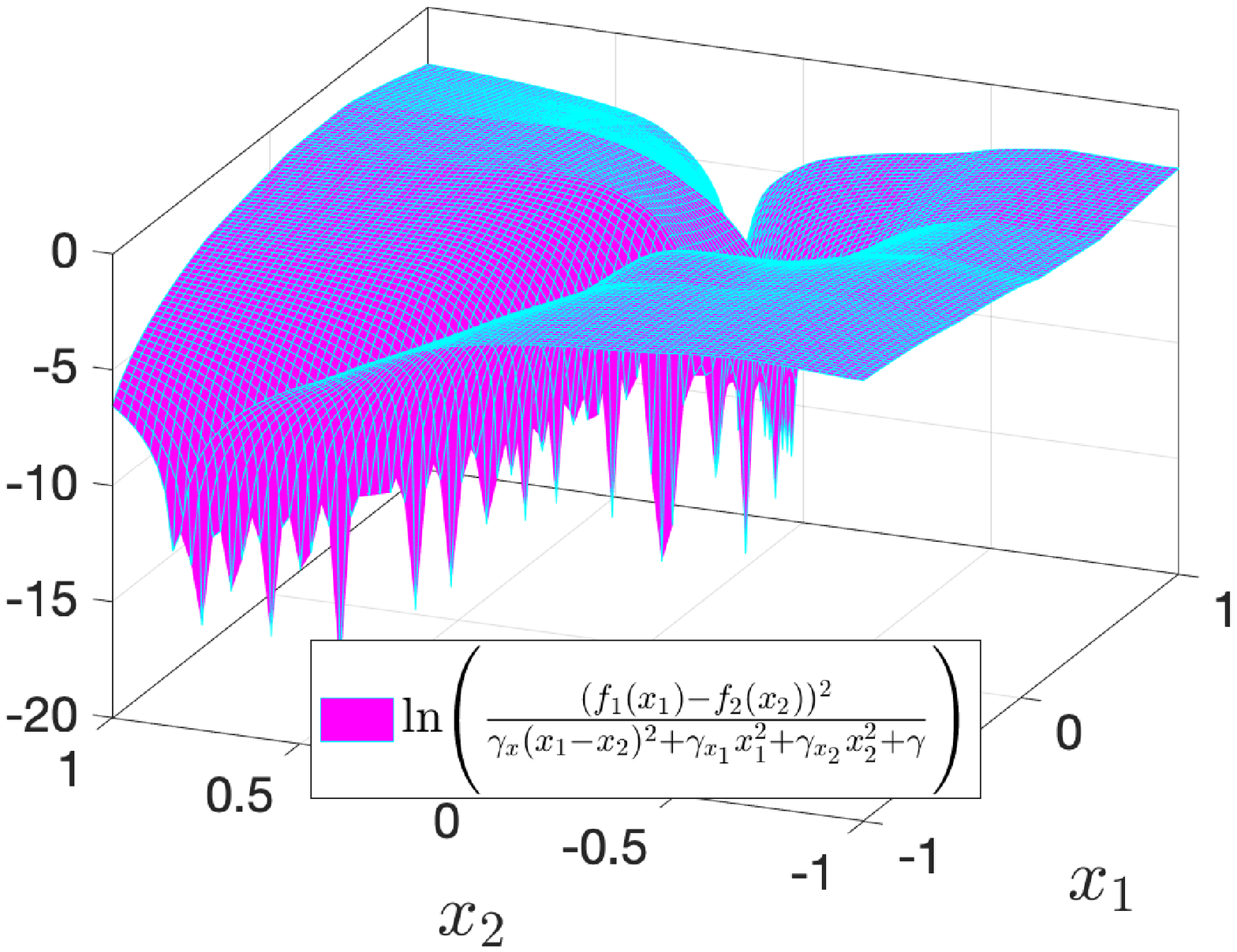}}
 \vspace{-0.7cm} }
\end{figure}
A bounding surface holding robustly for all inputs generated was obtained for each test case, an example of which can be found in Figure \ref{fig:NNCompFig}. When $n_x = 1$ and $n_f = 1$, the bound follows the error surface with $T \leq 2$ for the majority of the input space. The singularity of the log function at zero means that points in the input space generating small errors between the two neural networks are associated with the deep valleys in the figure.  The average distance between the bound and the error surface became larger with an increasing number of neurons and for multi-dimensional input and output vectors, as reflected in Table \ref{tab:NNcompData}'s data.

\subsection{Quantisation error bound}
Error bounds were also computed for quantised NNs where $W_2 = q(W_1)$, $b_2 = q(b_1)$ and $x_2 = q(x_1)$. Here, the output dimension was $n_f = 1$,  the input dimension was $n_x = 1$ with the input set for $x_1$  consisting of 100 evenly spaced values in $[-1, 1]$. The quantisation step was $\Delta = 2^{-2}$, as in two bits were used to store each number. Figure \ref{fig:QCompFig} in Appendix 7 contains example error bound curves for the various models considered and also an example in 3D (where the condition $x_2 = q(x_1)$ was relaxed, giving a looser bound as the input space was less restricted). 

Bounds were then computed for 100 randomly generated NNs with $\ell= 4$ hidden layers, with the corresponding results shown in Table \ref{tab:QcompData} of Appendix 7. Bound tightness was defined as the error between the original and quantised networks' output difference norm and the over-approximation bound of \eqref{equ:boundT}.

A bounded curve holding for all inputs was generated for each test case, with the step patterns of the error curves following from the quantisation. It was also observed that the bound loosened with an increasing number of neurons, as seen in Table \ref{tab:QcompData} of Appendix 7.

\subsection*{Worst-case quantisation error bound}
The worst-case quantisation error and bound values were also recorded and averaged for 100 random neural networks of $\ell= 2$ layers for quantisation levels $\Delta = 2^i, i \in \{1,2,3,4,5\}$. Note that the bound and error values come in pairs, so there exists a bound value corresponding with the maximum error as well as an error value corresponding to the maximum bound value. These results where then plotted in Figure \ref{fig:WorstCaseQ} of Appendix 7 where the maximum bound values tended to be close to the observed maximum error, however the gap between the corresponding maximum error and the bound tended to be quite large. It was also observed that both the error and the bound decreased at similar rates with an increasing level of quantisation. 

\subsection*{Pruned neural networks}
Figure \ref{fig:NNPruned} in Appendix 7 shows the results for bounding the error between a randomly-generated 20 neuron neural network and its pruned version (with $f_2(x_2)$ in Problem \ref{prob:general} being the pruned one), with the weights of the eight hidden neurons with smallest 2-norms set to zero (\cite{blalock}). Note, that by setting the second neural network to be identical to the first, the framework could also bound the behaviour of a neural network within its input space. Pruning that removes neurons rather than setting weights to zero could also be bounded with this framework. 

\section*{Conclusions}

A method to generate uppers bounds for the worst-case error induced by either quantising or pruning a trained neural network was introduced. The bounds hold robustly for all inputs in pre-specified sets, account for nonlinear activation functions and are generated from the solution of a semi-definite programme. Several numerical examples graphically illustrated the tightness of the bounds. Future work will explore reducing the conservatism of the bounds, improving the scalability of the semi-definite programming solvers and applying the method to other problems of interest where the similarity of two neural networks is to be evaluated.

\acks{Ross Drummond would like to thank the Royal Academy of Engineering for funding this research through a UKIC Fellowship as well as the Nextrode project of the Faraday Institution. Jiaqi Li would like to thank the Department of Engineering, University of Oxford for funding through the UROP scheme.}

\bibliography{bibliog} 


\section*{Appendices}

\subsection*{Appendix 1: Illustration of the neural network similarity problem}
\begin{figure}[H]
\centering
\includegraphics[width = 0.5\linewidth]{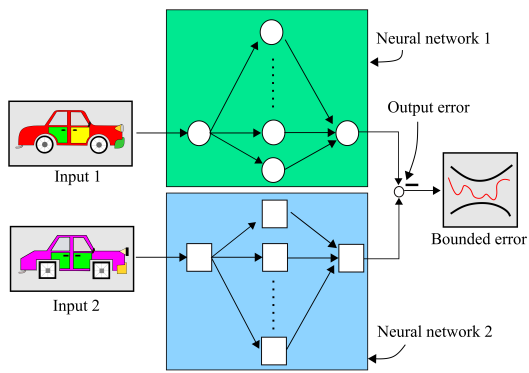}
\caption{Illustration of the similarity problem which bounds the worst case error between the outputs of two neural networks for all inputs in some set $(x_1,\,x_2) \in \mathcal{X}$. By setting the second neural network in this set-up to be the quantised or pruned one, the desired error bounds can be computed. }
\label{fig:colab}
\end{figure}

\subsection*{Appendix 2: Quadratic constraints overview}

As suggested by their name, a quadratic constraint is simply a non-negative quadratic algebraic expression capturing dependencies between its arguments. For instance, the following is satisfied by the ReLU nonlinearity
\begin{align}
    p_i(s,\sigma) = (s-\text{ReLU}(s))\sigma \geq 0, \quad \forall (s,\sigma) \in \mathbb{R}.
\end{align}
Now, consider the case where we have $N$ quadratic constraints $p_i(\cdot) \geq 0$ for $i = 1,\, \dots, \, N$ mapping $\mathbb{R}^n \to \mathbb{R}_+$ but want to show the negativity of another  $p_0(s,\sigma) < 0$. This is obviously true if there exists $\lambda_i \geq 0, \, i = 1, \, \dots, \, N$ such that 
\begin{align} \label{s-proc}
p_0(\cdot) + \sum^N_{i=1}\lambda_i p_i(\cdot) < 0.
\end{align}
The above is a version of the S-procedure \citep{ulf} but simplified to static systems- and it is this feature that is utilised in this work to compute the bonds. The benefit of working with \eqref{s-proc} is that it is a linear matrix inequality (LMI) \citep{ulf}, a class of well-understood convex problems. For the particular problem considered in the paper, the quantity to be checked $p_0(s,\sigma) < 0$  relates to the error bound whilst the specified non-negative quadratic constraints characterise properties of the activation functions, inputs sets and quantisation.

Quadratic constraints have a long history in control theory to encode information about unknown nonlinear disturbances (e.g. control saturation \citep{tarbouriech} and quantisation \citep{lam}) within a robustness analysis. For the specific problem considered in this paper, they bring the benefits of: (i) a common framework for describing the many features of the problem, including the various nonlinearities and sets, (ii) verifying a quadratic constraint can be cast as a LMI- a class of well-studied convex optimisation problems. The two main drawbacks are: (i) they can lead to conservative bounds as the bounds must hold for all terms satisfying the quadratic algebraic expressions, and (ii) even though a LMI is a convex problem, its complexity scales poorly ($\mathcal{O}(n^6)$ in runtime and $\mathcal{O}(n^4)$ in memory requirements, where $n$ is the number of neurons in the network \citep{dathathri}). However, the ability to compute worst-case bounds justified its consideration here.

\subsection*{Appendix 3: LMI quadratic constraints for input hyper-rectangle}\label{app:input}

Each neural network's input ($x_1,\, x_2)\in \mathcal{X}$ is assumed to lie within hyper-rectangles.
\begin{definition}
For the $i^\text{th}$ neural network with $i \in \{1,2\}$, define the hyper-rectangle containing the input $x_i$ as $\mathcal{X}_{i}= \{x_i  : \underline{x}_i \leq x_i \leq \overline{x}_i \}$. Within these hyper-rectangles, the combined input data for both NNs is constrained to $(x_1,\,x_2) \in \mathcal{X} $, with $\mathcal{X}$ potentially capturing cross-conditions relating the two inputs (for example the input of the second NN being the quantised version of the first, $x_2 = q(x_1))$.
\end{definition}

The hyper-rectangles $\mathcal{X}_i, \, i \in \{1,\,2\}$ are represented by the following quadratic constraints.

\begin{definition}\label{def:input_hypercube}
For $i \in \{1,\,2\}$, if $x_i \in \mathcal{X}_i$ then
\begin{subequations} \label{qc:input}
$
\begin{bmatrix}{x_i}^T &  1 \end{bmatrix}P_{\mathcal{X}_{i}}\begin{bmatrix}x_i \\ 1\end{bmatrix} \geq  0, 
$
where
\begin{align}
P_{\mathcal{X}_{i}} = \begin{bmatrix}  -\lambda_{x_{\infty}}& \frac{\lambda_{x_{\infty}}}{2}\left(\underline{x}_i + \overline{x}_i\right)  
\\ 
\left(\underline{x}_i + \overline{x}_i\right)\frac{\lambda_{x_{\infty}}}{2} &  -\underline{x}_i^T\lambda_{x_\infty}\overline{x}_i \end{bmatrix}, \quad \lambda_{x_{\infty}}\in \mathbb{D}^{n_x}_{+}.
\end{align}\end{subequations}
In addition, if $\overline{x}_1 = \overline{x}_2= \overline{x}$ and $\underline{x}_1 = \underline{x}_2= \underline{x}$, then the two inputs $(x_1,x_2)$  also the satisfy joint quadratic constraints of the form
\begin{subequations}  \begin{align} \label{qc:input_joint}
\begin{bmatrix}(2\overline{x} - (x_1 + x_2)) \\ (\overline{x}-\underline{x}) - (x_2-x_1) \end{bmatrix}^T P_{\mathcal{X}_{joint}}\begin{bmatrix}(x_1 + x_2)-2\underline{x} \\ (x_2-x_1)+\overline{x}-\underline{x}\end{bmatrix} \geq  0, 
\end{align}
where 
\begin{align}
P_{\mathcal{X}_{joint}} = \begin{bmatrix}  \lambda_{x_1+x_2}& 0  
\\ 
0 &\lambda_{x_1-x_2}\end{bmatrix}, \quad  (\lambda_{x_1+x_2}, \lambda_{x_1-x_2})\in \mathbb{D}^{n_x}_{+}.
\end{align}
\end{subequations}
Other joint conditions may also be satisfied by $(x_1,\,x_2) \in \mathcal{X}$, for instance when the input is quantised $x_2 = q(x_1).$

\end{definition}

\subsection*{Appendix 4: LMI quadratic constraints for the activation functions}
\begin{table} 
\centering
 \begin{tabular}{|l|c|c|c|c|c|c|}
 \hline
  $\phi(\cdot)$ property & Sigmoid-$\frac{1}{2}$ & tanh & ReLU  & ELU & Quantisation  & Saturation\\ \hline
  Sector bounded      &  \checkmark &  \checkmark &  \checkmark  & \checkmark & \checkmark  & \checkmark \\
  Slope restricted      &  \checkmark &  \checkmark & \ \checkmark & \checkmark & \checkmark  & \checkmark\\
  Bounded & \checkmark&  \checkmark &  $\times$  & $\times$  & $\times$  & \checkmark \\
  Positive & $\times$  & $\times$  & \checkmark  & $\times$ &  $\times$ &  $\times$   \\
  Positive complement & $\times$   & $\times$  & \checkmark  & $\times$  & $\times$ &  $\times$ \\
  Complementarity condition         & $\times$    & $\times$  &   \checkmark & $\times$  & $\times$ &  $\times$   \\
  Affine lower bound & $\times$& $\times$ & $\times$ & $\times$  & \checkmark   &  $\times$ \\
  Affine upper bound &$\times$ & $\times$& $\times$& $\times$  & \checkmark &  $\times$   \\
  \hline
 \end{tabular}
 \caption{Properties of commonly used activation functions, including the sigmoid, tanh, rectified linear unit ReLU and exponential linear unit (ELU). }
 \label{tab:activationfunc}
\end{table}

\begin{definition}[\cite{drummond2021reducedorder}] \label{def:nonlinearfunctional}
Consider a function $\phi(s): \mathcal{S} \subseteq \mathbb{R} \to \mathbb{R}$ satisfying $\phi(0) = 0$. This function is said to be sector bounded if 
\begin{subequations}\begin{align}
\frac{\phi(s)}{s} \in [0, \delta] \quad \forall s \in \mathcal{S},
\end{align}
slope restricted if 
\begin{align}
\frac{d\phi(s)}{ds} \in [\underline{\beta},\, \beta],  \quad \forall s \in \mathcal{S}, \quad \beta >0, 
\end{align}
and monotonic if $\underline{\beta} = 0$. If $\phi(s)$ is slope restricted, then it is also sector bounded. The nonlinearity $\phi(s)$ is bounded if
\begin{align}
\phi(s) \in [\underline{c}, \overline{c}],\quad \forall s \in \mathcal{S},
\end{align}
positive if
\begin{align}
\phi(s) \geq 0 , \quad \forall s \in \mathcal{S},
\end{align}
its complement is positive if
\begin{align}
\phi(s)-s \geq 0 , \quad \forall s \in \mathcal{S},
\end{align}
and it satisfies the complementarity condition if 
\begin{align}
(\phi(s)-s)\phi(s) = 0, \quad \forall s \in \mathcal{S}.
\end{align}\end{subequations}
\end{definition}

\begin{lemma} \label{lem:ReLUQCs}
Consider $\xi_1, \xi_2 \in  \mathbb{R}^{M} $, where $\xi_1, \xi_2$ are the arguments of the network activation functions $\phi(\cdot)$ defined in \eqref{xi_def}. If $\phi(\xi) : \mathbb{R}^{M}\to \mathbb{R}^{M}$ is a sector-bounded nonlinearity then
\begin{align} \label{equ:sec}
    (\xi_i - \phi(\xi_i))^T\lambda^{sec}_i\phi(\xi_i) \geqslant 0,\quad  \lambda^{\text{sec}}_i \in \mathbb{D}^{M}_{+}, \forall i \in \{1, 2\}.
\end{align}

If $\phi(\cdot)$ is slope-restricted, then for any $(k_1,\,k_2) \in \{1,\, \dots,\, M\}$ and $(i,j) \in \{1,\,2\}$ 
 \begin{align}  \label{equ:slope}
    ({\beta}(\xi_i^{k_1} - \xi_j^{k_2})- & (\phi(\xi_i^{k_1}) - \phi(\xi_j^{k_2}))^T\lambda^{slope}_{i,j,k_1,k_2}(\phi(\xi_i^{k_1}) - \phi(\xi_j^{k_2})-\underline{\beta}(\xi_i^{k_1} -\xi_j^{k_2})) \geqslant 0,  
     ~ \lambda^{\text{slope}}_{i,j,k_1,k_2} \in \mathbb{R}_{+}^{2M},
 \end{align}
and if instead it is just monotonic then
\begin{align} 
      ({\beta}(\xi_i^{k_1} - \xi_j^{k_2})- & (\phi(\xi_i^{k_1}) - \phi(\xi_j^{k_2}))^T\lambda^{mon}_{i,j,k_1,k_2}(\phi(\xi_i^{k_1}) - \phi(\xi_j^{k_2})) \geqslant 0,  
     ~ \lambda^{\text{mon}}_{i,j,k_1,k_2} \in \mathbb{R}_{+}^{2M}.
 \end{align}
If $\phi(\cdot)$ is positive, then
\begin{align} \label{equ:pos}
  \lambda^{pos}_i\phi(\xi_i) \geqslant 0, \quad \lambda^{pos}_i \in \mathbb{R}^{1 \times M}_{+},  \forall i \in \{1, 2\}.
\end{align}
If $\phi(\cdot)$ satisfies the complimentary condition, then
\begin{align} \label{equ:comp}
    (\phi(\xi_i) - \xi_i)^T\lambda^{cpos}_i\phi(\xi_i) = 0, \quad  \lambda^{cpos}_i \in \mathbb{D}^{M} \forall i \in \{1, 2\}.
\end{align}
Note that \eqref{equ:comp} is a strict equality condition and that the positivity of the scaling terms $\lambda^{cpos}_i$ can be relaxed. If both $\phi(\cdot)$ and its complement are positive, then  the following cross terms are also obtained
\begin{subequations} \label{equ:crx}
\begin{align}
    \phi(\xi_i)^T\lambda^{crx}_{i,j}(\phi(\xi_j) - \xi_j)  \geqslant 0, \quad \lambda^{\text{crx}}_{i,j} \in \mathbb{R}_{+}^M, ~ \forall (i,\,j) \in \{1,\,2\},
    \\
    \phi(\xi_i)^T\lambda^{crx, \phi}_{i,j}\phi(\xi_j)  \geqslant 0, \quad  \lambda^{\text{crx,\,$\phi$}}_{i,j} \in \mathbb{R}_{+}^M, ~ \forall (i,\,j) \in \{1,\,2\}.
\end{align}\end{subequations}

\end{lemma}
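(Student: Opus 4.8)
The plan is to derive every one of the stated vector/matrix relations from the corresponding \emph{scalar, per-neuron} property of Definition~\ref{def:nonlinearfunctional}, using two elementary facts: the product of two reals that share the same sign is non-negative, and a sum of non-negative terms weighted by non-negative coefficients is itself non-negative, whereas an elementwise \emph{equality} lifts to an equality for \emph{any} weights. Because $\phi(\cdot)$ acts elementwise, each coordinate $k$ of $\xi_i$ produces a scalar statement about $\phi(\xi_i^k)$; collecting these against a diagonal-positive matrix $\lambda\in\mathbb{D}^M_+$ (or an entrywise non-negative vector in $\mathbb{R}^M_+$) and summing reproduces exactly the quadratic forms \eqref{equ:sec}--\eqref{equ:crx}.

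First I would dispatch the one-point conditions. For the sector bound, $\phi(s)/s\in[0,\delta]$ is equivalent to $(\delta s-\phi(s))\phi(s)\ge 0$ for every scalar $s$ (the factors carry the sign pattern forced by $\phi$ lying between $0$ and $\delta s$, with $\phi(0)=0$ covering $s=0$); in the normalised case $\delta=1$ recorded in \eqref{equ:sec} this reads $(s-\phi(s))\phi(s)\ge 0$, and applying it coordinatewise to $s=\xi_i^k$, weighting by $(\lambda^{sec}_i)_{kk}\ge 0$, and summing gives $(\xi_i-\phi(\xi_i))^T\lambda^{sec}_i\phi(\xi_i)\ge 0$. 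Positivity $\phi(s)\ge 0$ yields \eqref{equ:pos} after scaling each entry by a non-negative weight. The complementarity identity $(\phi(s)-s)\phi(s)=0$ holds with \emph{equality} coordinatewise, so summing against arbitrary real weights $(\lambda^{cpos}_i)_{kk}$ gives the equality \eqref{equ:comp}; this is precisely why the positivity of $\lambda^{cpos}_i$ may be relaxed. For the cross terms \eqref{equ:crx}, positivity of $\phi$ together with positivity of its complement $\phi(s)-s$ makes each product $\phi(\xi_i^k)(\phi(\xi_j^k)-\xi_j^k)\ge 0$ and $\phi(\xi_i^k)\phi(\xi_j^k)\ge 0$, and non-negative combinations deliver both inequalities.

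The main obstacle is the \emph{incremental} two-point slope condition \eqref{equ:slope}, which must be extracted from the pointwise derivative bound $d\phi/ds\in[\underline{\beta},\beta]$. Here I would first observe that the maps $s\mapsto\phi(s)-\underline{\beta}s$ and $s\mapsto\beta s-\phi(s)$ each have non-negative derivative, hence are non-decreasing; this bridges the differential bound to the chord bound $\underline{\beta}\le(\phi(s_1)-\phi(s_2))/(s_1-s_2)\le\beta$, and it remains valid for merely Lipschitz activations such as ReLU, where these two functions are still non-decreasing (piecewise linear with non-negative slopes), so no smoothness via the mean value theorem is actually required. Writing $\Delta s=\xi_i^{k_1}-\xi_j^{k_2}$ and $\Delta\phi=\phi(\xi_i^{k_1})-\phi(\xi_j^{k_2})$, the two bounds show that $\beta\Delta s-\Delta\phi$ and $\Delta\phi-\underline{\beta}\Delta s$ both share the sign of $\Delta s$, so their product is non-negative; scaling by $\lambda^{slope}_{i,j,k_1,k_2}\ge 0$ and summing over all index pairs $(i,j,k_1,k_2)$ gives \eqref{equ:slope}, with the monotonic case being the specialisation $\underline{\beta}=0$. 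The only genuine care needed is this passage from the differential to the chord bound for non-$C^1$ activations, and confirming that pairing two \emph{different} coordinates $k_1\neq k_2$ (and networks $i\neq j$) is legitimate because the chord inequality holds for any two scalar arguments of the same elementwise map $\phi$.
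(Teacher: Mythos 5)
Your proof is correct, and it takes the same (implicit) route as the paper: the paper states Lemma~\ref{lem:ReLUQCs} without proof, relying on exactly the standard argument you give --- each quadratic constraint follows from the corresponding scalar property of Definition~\ref{def:nonlinearfunctional} applied per coordinate, then lifted by non-negative (or, for the complementarity equality, arbitrary) weights, in the spirit of the S-procedure sketch of Appendix~2. Your added care in bridging the pointwise derivative bound to the two-point chord bound via monotonicity of $s\mapsto\phi(s)-\underline{\beta}s$ and $s\mapsto\beta s-\phi(s)$ (covering non-smooth activations such as the ReLU, and justifying the cross-coordinate pairs $k_1\neq k_2$) fills in precisely the step the paper leaves unstated.
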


\begin{definition}[QC for ReLU] \label{def:QC4ReLU}
If $\phi(\cdot) = ReLU(\cdot)$ then the matrix $\Lambda$ in \eqref{equ:Mlambda} is defined by
\begin{subequations} \label{qc:ReLU}
\begin{align}
[\Lambda_{13}, ~  \Lambda_{14}, ~  \Lambda_{15}]  & = [\lambda^{comp}_1, ~  -\lambda^{crx}_2  ~ , -\lambda^{cpos}_{2}],
\\
[\Lambda_{23}, ~  \Lambda_{24}, ~  \Lambda_{25}] &= [-\lambda^{crx}_1 , ~  \lambda^{comp}_2 ~ , -\lambda^{cpos}_{2}] ,
\\
    [\Lambda_{33}, ~  \Lambda_{34}, ~  \Lambda_{35}] &= [ -\lambda^{comp}_1 ~ , \lambda^{crx,\phi} + \lambda^{crx}_2+\lambda^{crx}_1,  ~  \lambda^{pos}_1 + \lambda^{cpos}_1],
\\
[\Lambda_{44}, ~  \Lambda_{45}, ~  \Lambda_{55}]  &= [-\lambda^{comp}_2, ~   \lambda^{pos}_{2}+\lambda^{cpos}_{2}, ~  0],
\end{align}
with the scaling terms $\lambda^{crx,\phi}, \lambda^{pos}_i, \lambda^{cpos}_i, \lambda^{crx}_i \in \mathbb{R}^{N \times N}_+$, 
$\lambda^{comp}_i \forall i \in \{1, 2\}$ defined in Lemma \ref{lem:ReLUQCs}.
\end{subequations}






\end{definition}

\subsection*{Appendix 5: LMI quadratic constraints for quantisation function}
\begin{lemma}\label{lem:quant}
The quantisation function of \eqref{equ:q(x)} satisfies the following quadratic constraints: it is sector bounded
\begin{align}
   (x_i-q(x_i))^T  \lambda^\text{q,sec}_{i} (q(x_i) \geq 0, \quad \forall x_i \in \mathcal{X},\, i \in \{1, \, 2\}, ~ \lambda_\text{i}^{q,sec}  \in \mathbb{D}_+^{n_{x_i}} ,
\end{align}
and  can be both lower and upper bounded
\begin{align}
   & \lambda^\text{q,low}_{i} q(x_i)-(x_i-\Delta)) \geq 0, \quad \lambda^\text{q,up}_{i}(x_i+\Delta-q(x_i)) \geq 0,
    \\ & \quad \forall x_i \in \mathcal{X},\, i \in \{1, \, 2\}, ~ (\lambda^\text{q,low}_{i}, \,\lambda^\text{q,up}_{i} ) \in \mathbb{R}_+^{1 \times n_{x_i}} ,\nonumber
\end{align}
with these linear bounds implying the following quadratic for all $ (x_i,\,x_j) \in \mathcal{X}$
\begin{align}
   (q(x_i)-(x_i-\Delta))^T\lambda^\text{q,quad}_{i,j}((x_j+\Delta)-q(x_j)) \geq 0,
    ~, (i, \, j) \in \{1, \, 2\}, \lambda^\text{q,quad}_{i,j} \in \mathbb{D}_+^{n_{x_i}}.
\end{align} \label{equ: qx}
\end{lemma}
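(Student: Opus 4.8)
The plan is to verify each of the three families of inequalities directly from the round-toward-zero structure of $q(\cdot)$ in \eqref{equ:q(x)}, exploiting that the quantiser acts elementwise so that every vector inequality reduces to a collection of scalar statements reassembled with nonnegative diagonal (or entrywise nonnegative) multipliers.

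First I would isolate the two pointwise facts that drive everything. For an arbitrary scalar $s$: (a) $q(s)$ and $s$ share the same sign and $|q(s)| \le |s|$, since the $\text{sgn}(s)$ times floored-magnitude construction rounds toward zero; and (b) the residual obeys $|s - q(s)| \le \Delta$, because the floored magnitude lies within one step $\Delta$ of $|s|$. Fact (a) shows that $s - q(s)$ and $q(s)$ are both nonnegative when $s \ge 0$ and both nonpositive when $s < 0$, so $(s - q(s))\,q(s) \ge 0$ for all $s$, with equality exactly when $s$ is an integer multiple of $\Delta$ (in particular at $s=0$). This is precisely the sector-$[0,1]$ property recorded for $q$ in Table \ref{tab:activationfunc}.

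For the sector bound I would apply this scalar product inequality coordinatewise: writing $(x_i - q(x_i))^T \lambda^{\text{q,sec}}_i q(x_i) = \sum_k (\lambda^{\text{q,sec}}_i)_{kk}\,(x_{i,k} - q(x_{i,k}))\,q(x_{i,k})$, each summand is a nonnegative diagonal weight times a nonnegative product, so the total is $\ge 0$ for any $\lambda^{\text{q,sec}}_i \in \mathbb{D}^{n_{x_i}}_+$, mirroring the ReLU abstraction \eqref{equ:sec}. For the linear lower and upper bounds I would use fact (b): $|s - q(s)| \le \Delta$ gives $s - \Delta \le q(s) \le s + \Delta$, hence $q(x_i) - (x_i - \Delta) \ge 0$ and $(x_i + \Delta) - q(x_i) \ge 0$ entrywise; left-multiplying by the entrywise-nonnegative row vectors $\lambda^{\text{q,low}}_i, \lambda^{\text{q,up}}_i \in \mathbb{R}^{1 \times n_{x_i}}_+$ preserves nonnegativity. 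Finally, the quadratic condition is the standard product of two nonnegative linear constraints: the vectors $q(x_i) - (x_i - \Delta)$ and $(x_j + \Delta) - q(x_j)$ are entrywise nonnegative by the linear bounds, so their weighted inner product $\sum_k (\lambda^{\text{q,quad}}_{i,j})_{kk}\,[q(x_{i,k}) - (x_{i,k} - \Delta)]\,[(x_{j,k} + \Delta) - q(x_{j,k})] \ge 0$ for any $\lambda^{\text{q,quad}}_{i,j} \in \mathbb{D}^{n_{x_i}}_+$.

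The main obstacle is fact (a), the sign-and-magnitude behaviour underpinning the sector bound. Care is needed because the prose describing ``floor toward the largest smaller multiple of $\Delta$'' appears to conflict with the $\text{sgn}(s)\floor*{\,\cdot\,}$ formula of \eqref{equ:q(x)}, which instead rounds toward zero; the sector inequality requires the latter convention, so that $q(s)$ never crosses zero relative to $s$, and I would state and prove the lemma under that convention. By contrast, the linear bounds hold under either rounding rule, and the quadratic is a purely algebraic consequence of them, so those steps are routine once the pointwise facts are in place.
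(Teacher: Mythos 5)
Your proof is correct and takes the route the paper intends: the paper's own proof of Lemma \ref{lem:quant} is a one-line appeal to the definition of $q(\cdot)$, and your elementwise verification (sector bound from sign preservation and magnitude contraction of round-toward-zero, linear bounds from $|s-q(s)|\leq\Delta$, and the quadratic as a nonnegatively weighted product of the two nonnegative linear residuals) simply supplies the details it omits. Your side observation is also well taken and worth keeping: the sector inequality genuinely requires the round-toward-zero reading of the $\text{sgn}(s)\lfloor\cdot\rfloor$ formula in \eqref{equ:q(x)} rather than the floor-toward-$-\infty$ prose (e.g.\ $s=-0.1$, $\Delta=2^{-2}$ flooring down gives $(s-q(s))\,q(s)<0$), whereas the linear and quadratic constraints survive either convention.
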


\begin{proof}
Follows from definition of the quantisation function (\ref{equ: qx}). These quadratics could be collected into the form  $P_q$
\end{proof}

\subsection*{Appendix 6: Proof of Theorem \ref{theorem:SDP}}

\begin{proof}
Multiplying \eqref{LMI} on the right by $\eta$ (defined in \eqref{mu}) and on the left by $\eta^T$  gives
\begin{align}\label{ineqs_proof}
 M_{\mathcal{X}}(P_{\mathcal{X}}) +  M_{q}(P_q)+M_\phi(\Lambda) + M_{\|f-g\|_2}(\Gamma) < 0,
\end{align}
where  $M_{\mathcal{X}}(P_{\mathcal{X}}),\,M_q(P_q)$ and $ M_\phi(\Lambda)$ are the non-negative inequalities defined in \eqref{equ:Mx}, \eqref{equ:Mq}, \eqref{equ:Mlambda} and where  $M_{\|f-g\|_2}(\Gamma) $ is the performance metric defined in \eqref{equ:Mgamma}. Since $M_{in}(P_{\mathcal{X}}),\,M_q(P_q)$ and $ M_\phi(\Lambda)$ are non-negative  whenever $x \in \mathcal{X}$, then  for \eqref{ineqs_proof} to hold, it must be that  $M_{\|f-g\|_2}(\Gamma) < 0$, giving the bound of \eqref{bound_thm}.
\end{proof}

\subsection*{Appendix 7: Numerical experiment tables and figures}

\begin{figure}[t]
\vspace{-1cm}
\floatconts
  {fig:QCompFig}
  {\caption{ Error bounds of the quantised neural networks.}}
  {%
    \subfigure[Quantisation error bound with $\Delta~=~2^{-2}$ and $\ell = 2$ with marked quantisation steps on the x-axis. ]{\label{fig:QCompFigBE}%
      \includegraphics[width=0.4\linewidth]{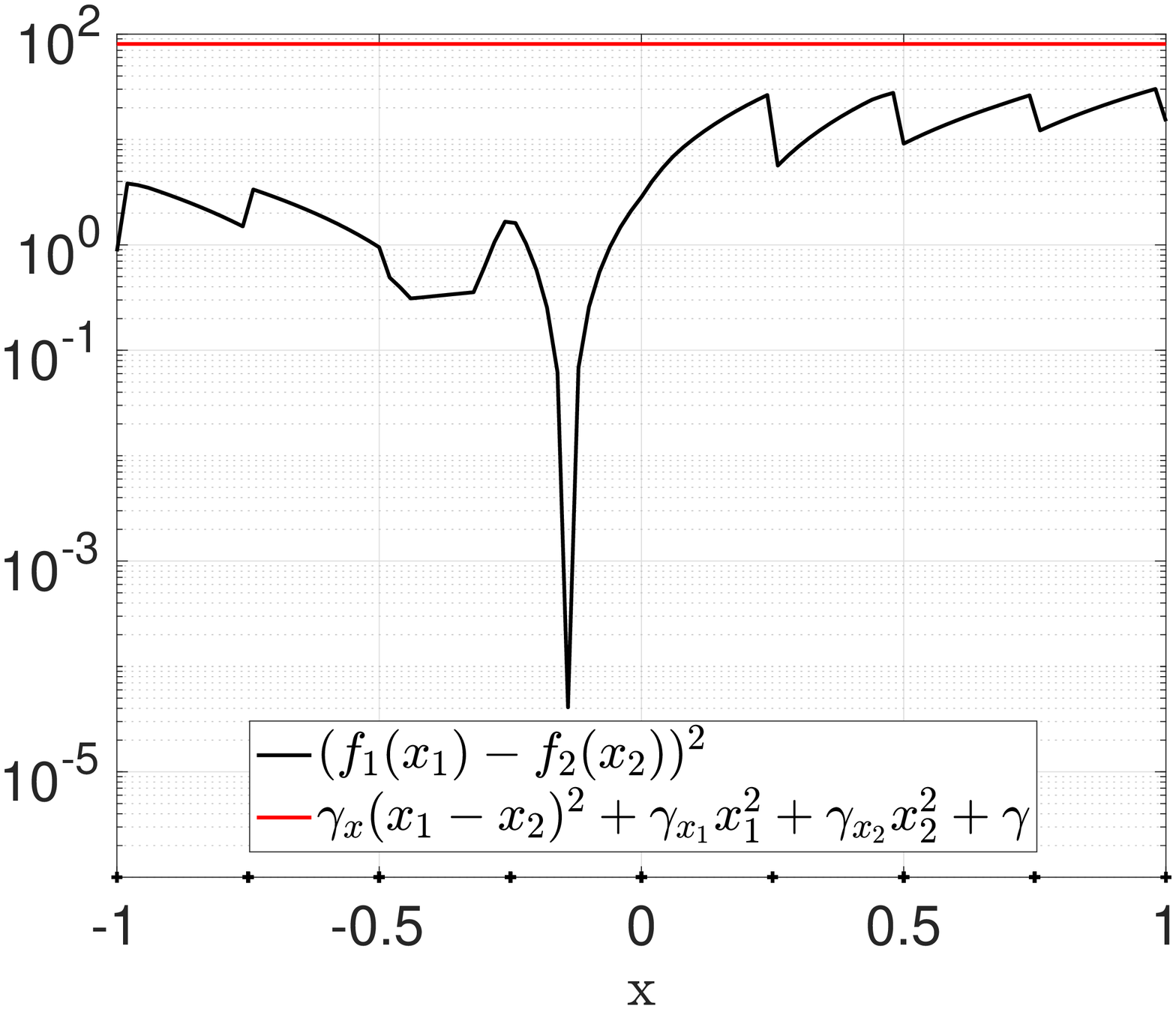}}%
    \qquad
    \subfigure[Quantisation error bound illustrated in 3D with  $\Delta~=~2^{-2}$ and  $\ell = 2$.]{\label{fig:QCompFig3D}%
      \includegraphics[width=0.42\linewidth]{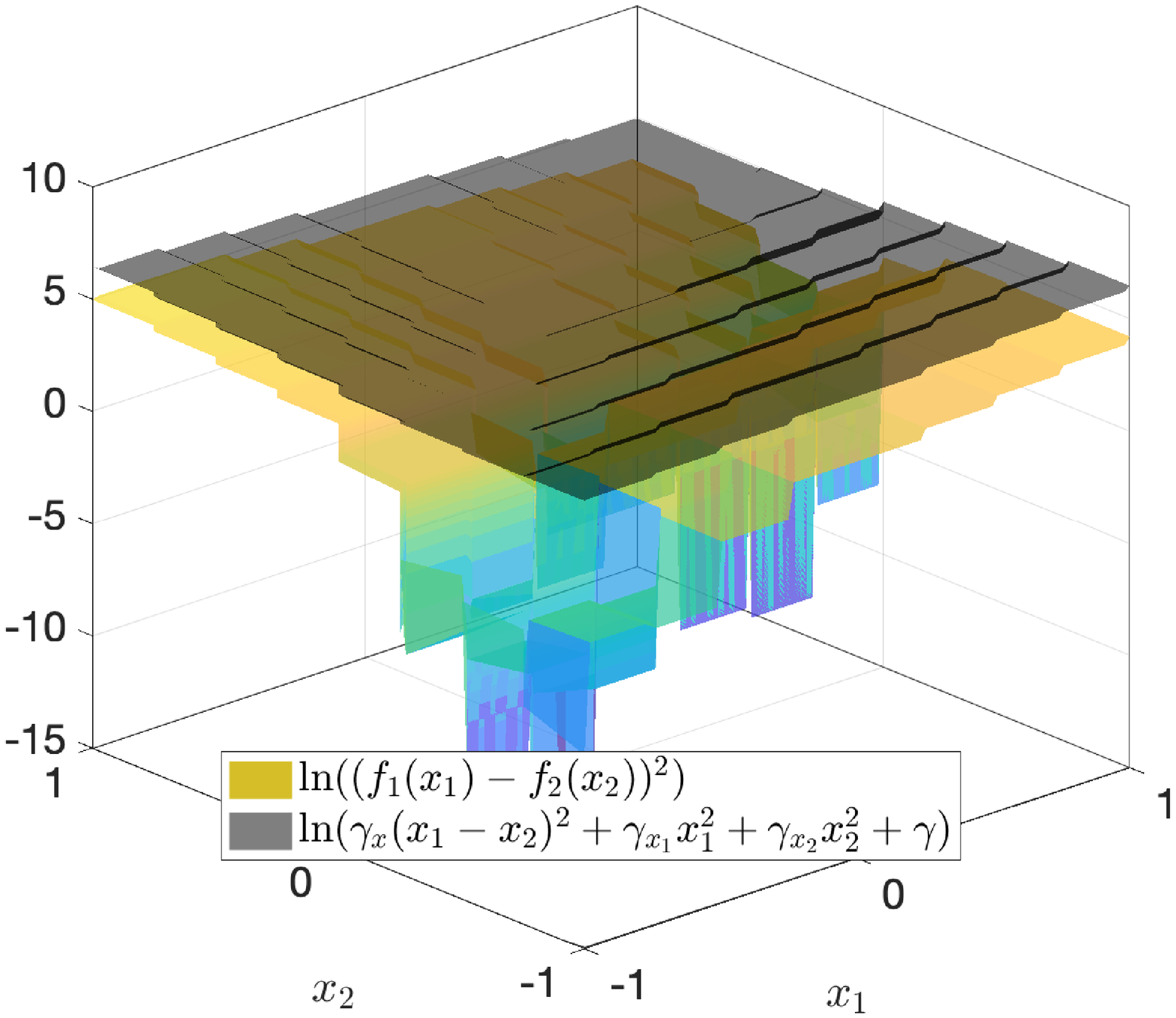}}
  }
\end{figure}

\begin{table} [H]
 \centering
 \begin{tabular}{|c||c|c|c|c|c|c|c|c|}
 \hline
  $\ell$ & $\gamma_x$ & $\gamma_{x_1}$ & $\gamma_{x_2}$ & $\gamma$ & mean($T$)  & max($T$) & min($T$) & Run time (s) \\ \hline
  1 & 14.9723 & 0.3027 & 0.0829 & 1.7490 & 3.3721 & 20.0648 & 1.0332 & 1.1667 \\
  2 & 181.1923 & 18.6963 & 11.9055 & 52.5948 & 5.0755 & 21.7170 & 2.3473 & 2.9527 \\
  3 & 1432.4 & 351.5557 & 259.4226 & 613.1431 & 6.0144 & 22.6888 & 3.3431 & 18.7965 \\
  4 & 16532 & 7309.7 & 5344.7 & 6775.3 & 7.1827 & 24.2606 & 4.2432 & 78.6121 \\
  \hline
 \end{tabular}
 \caption{\label{tab:NNcompData} Average values (over 100 runs) of the neural network error bounds from Theorem \ref{thm:SDP} with $T = \ln(f_1(x_1)-f_2(x_2))^2) - \ln(\gamma_x (x_1-x_2)^2 + \gamma_{x_1} {x_1}^2 + \gamma_{x_2} {x_2}^2 + \gamma)$.}
\end{table}

\begin{table} [H]
 \centering 
 \begin{tabular}{|c||c|c|c|c|c|c|c|c|}
  \hline
  $\ell$ & $\gamma_x$ & $\gamma_{x_1}$ & $\gamma_{x_2}$ & $\gamma$ & mean($T$)  & max($T$) & min($T$) & Run time (s) \\ \hline
  1 & 0.0000 & 1.0730 & 0.0705 & 2.3730 & 2.7206 & 8.7061 & 0.8257 & 0.8629 \\
  2 & 0.0000 & 19.9720 & 0.5333 & 27.7495 & 3.9042 & 10.3077 & 1.8512 & 3.4178 \\
  3 & 0.0006 & 190.4040 & 5.9258 & 558.7039 & 4.6004 & 10.2126 & 2.6929 & 19.8407 \\
  4 & 0.00870 & 1065.9 & 61.1359 & 10893 & 6.0101 & 11.5699 & 3.8971 & 87.7687 \\
  \hline
 \end{tabular}
 \caption{\label{tab:QcompData} Average values (over 100 runs) of the error bounds for the quantised neural network with $T = \ln(f_1(x_1)-f_2(x_2))^2) - \ln(\gamma_x (x_1-x_2)^2 + \gamma_{x_1} {x_1}^2 + \gamma_{x_2} {x_2}^2 + \gamma)$  with increasing number of neurons.}
 
\end{table}

\begin{figure}[H]
\vspace{-1cm}
    \centering 
    \includegraphics[width=0.35\linewidth]{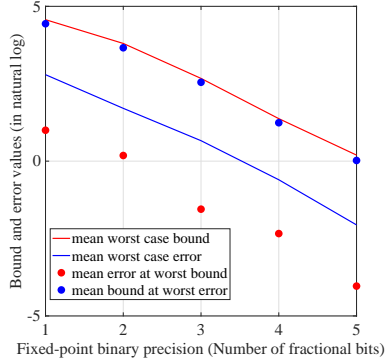}
    \caption{Progression of average (over 100 runs) worst case bound and error with quantisation level for randomly generated neural networks with 20 neurons and $\ell = 2$ layers.}
    \label{fig:WorstCaseQ}
\end{figure}


\begin{figure}[H]
\floatconts
  {fig:NNPruned}
  {\caption{Error surface of the pruned neural network with $\ell = 4$.}}
  {%
   \subfigure[The error (coloured) and bound (grey) surfaces.]{\label{fig:NNPrunedBE}%
      \includegraphics[width=0.45\linewidth]{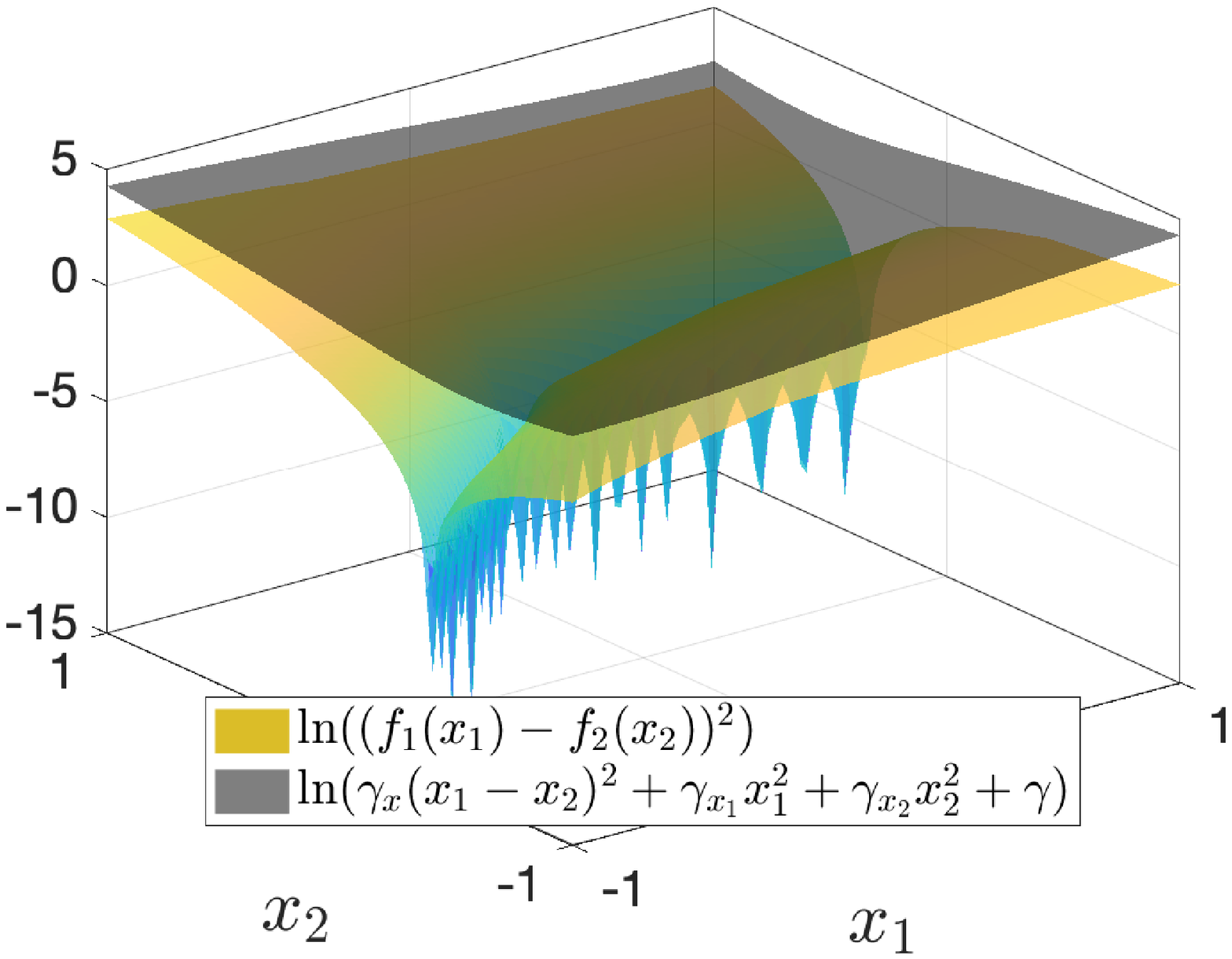}}%
    \qquad
    \subfigure[The tightness surface $T$ of the bound and error surfaces in Fig. \ref{fig:NNPrunedBE}, viewed from another angle.]{\label{fig:NNPrunedT}%
      \includegraphics[width=0.45\linewidth]{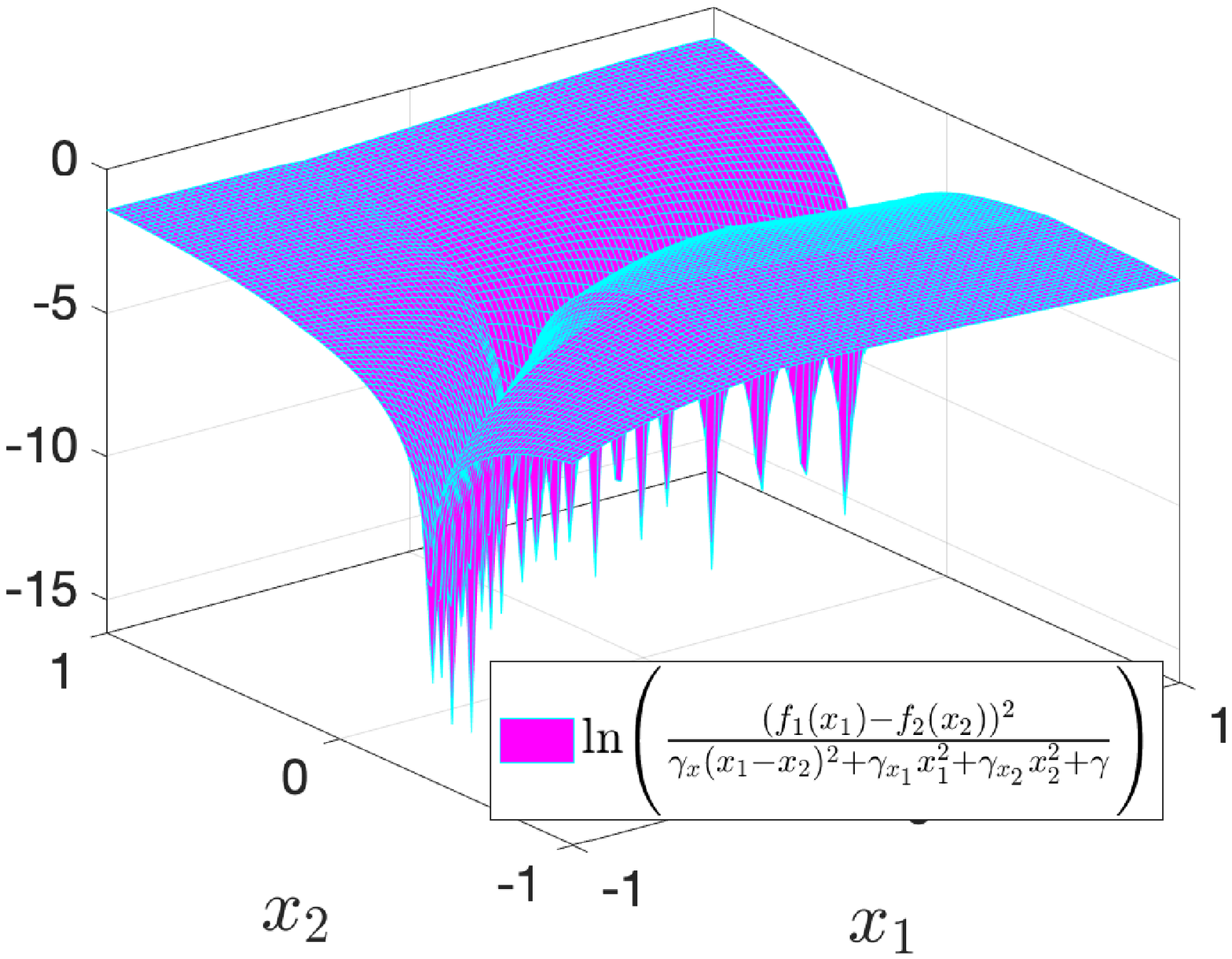}}
  }
\end{figure}

\end{document}